\newtheorem{myTheorem}{Theorem}
\newtheorem{myDefinition}{Definition}
\newcommand{\MyAlgoCapSpace}{0.5em}
\newcommand{\MyAlgoDecSpace}{0.25em}
\newcommand{\Datom}{\textit{Datom}\xspace}
\newcommand{\Datoms}{\textit{Datoms}\xspace}
\title{\LARGE \bf
	Datom: A Deformable modular robot for building self-reconfigurable programmable matter
}
\author{Benoît Piranda$^{\dagger}$ and Julien Bourgeois$^{\dagger}$
	\thanks{$^{\dagger}$All authors are with Univ. Bourgogne Franche-Comt\'e, FEMTO-ST Institute, CNRS, 1~cours Leprince-Ringuet, 25200, Montb\'eliard, France.
		{\tt\small \{first\}.\{last\}@femto-st.fr}}%
}
\begin{document}

	\maketitle
	\thispagestyle{empty}
	\pagestyle{empty}

	\begin{abstract}
		Moving a module in a modular robot is a very complex and error-prone process. Unlike in swarm, in the modular robots we are targeting,  the moving module must keep the connection to, at least, one other module. In order to miniaturize each module to few millimeters, we have proposed a design which is using electrostatic actuator. However, this movement is composed of several attachment, detachment creating the movement and each small step can fail causing a module to break the connection.
		The idea developed in this paper consists in creating a new kind of deformable module allowing a movement which keeps the connection between the moving and the fixed modules.
		We detail the geometry and the practical constraints during the conception of this new module.
		We then validate the possibility of movement for a module in an existing configuration. This implies the cooperation of some of the modules placed along the path and we show in simulation that it exists a motion process to reach every free positions of the surface for a given configuration.
		
	\end{abstract}
	
	\section{Introduction}
	The idea of designing hardware robotic modules able to be attached together has given birth to the field of modular robotics and when these modules can move by themselves they are named Modular Self-reconfigurable Robots (MSR) \cite{yim_2007_modular}\cite{stoy_2010_self} also named earlier as metamorphic robotic systems \cite{chirikjian_1994_kinematics} or cellular robotic systems \cite{fukuda_1989_communication}. There are five families of MSR namely: lattice-based when modules are aligned on a 3D lattice, chain-type when the modules are permanently attached through an articulation, forming a chain or more rarely a tree, hybrid which is a mix between lattice-based and chain-type, mobile when each module can move autonomously and more recently continuous docking \cite{swissler_2018_fireant} where latching can be made in any point of the module. 
	Since then, there have been many robots proposed and built by the community using different scales of modules and different latching and moving technologies. However, none of them have succeeded to reach a market.
	
	Instead of building a multi-purposes modular robot, and then trying to apply it for a given task, we start with the application and we propose the design of the modular robot to fit this application. Our objective is to build programmable matter~\cite{bourgeois_2016_programmable} which is a matter which can change one or several of its physical properties, more likely its shape, according to an internal or an external action. Here, programmable matter will be constructed using a MSR, i.e. a matter composed of mm-scale robots, able to stick together and turn around each other as it has been described in the Claytronics project \cite{goldstein_2004_claytronics}. The Programmable Matter Project\footnote{http://projects.femto-st.fr/programmable-matter/} is a sequel of the Claytronics project and reuses most of its ideas and concepts. The requirements for each module are the following: mm-scale, being able to move in 3D, compute and communicate with their neighbors and the idea is to have thousands of them all linked together.
	Moving in 3D is the most complicated requirement as it needs a complex trade off between several parameters during the design phase. For example, moving requires power and, therefore, power storage, which adds weight to the module, the trade off being between having more power by adding more power storage and having a module as light as possible for easing the movement.
	We are currently building and testing a quasi-spherical module we designed \cite{piranda_2018_designing}. This module rolls on another module using electrostatic electrodes. This way of moving creates uncertainty in the success of the movement as it is a complex sequence of repulsing/attaching/detaching actuations and we would like to study a movement where the moving module always stay latched to the pivot module.

	The idea that drives this work is to design a motion process which never disconnects the moving and the fixed modules. We propose to define a deformable module named Deformable Atom \Datom, as a reference to the Claytronics Atom, Catom. 
	Each module is strongly connected to neighbors in the Face-centered cubic 
	(\textsc{FCC}) lattice with large connectors (drawn in red in all following figures). 
	Two connected modules must deform their shapes to align future latched connectors while the previous connection is maintained.
	When new connectors are aligned they are strongly attached and the previous connection is released. Finally, the two modules return to their original shape.
	
	Figure~\ref{fig:motion} shows the decomposition of the movements of a mobile module $B$ moving around a fixed module $A$ to go from the position shown in Figure~\ref{fig:motion}.a to the position shown in Figure~\ref{fig:motion}.f. We consider that connectors $B_1$ and $A_{10}$ are initially attached. 
	In Figure~\ref{fig:motion}.a and Figure~\ref{fig:motion}.f $A$ and $B$ are not moving while in Figure~\ref{fig:motion}.d they are under actuation and deformed. 
	During motion, simultaneous deformations of the two modules allow to maintain the connection between $B_1$ and $A_{10}$. At the middle of the deformation process (see Figure~\ref{fig:motion}.d), four connectors of $B$ are in front of four connectors of $A$: (${(A_{10},B_1),(A_3,B_2),(A_2,B_3),(A_5,B_4)}$), but only one couple $(A_{10},B_1)$ is still attached. In this case, four different motions can be used to reach four different positions. To move to the final destination, connectors $A_3$ and $B_2$ are then attached and connectors $(A_{10},B_1)$ are released. An mirrored deformation from the previous ones moves module $B$ to its final position.
	
	\section{Related works}

	
	Many solutions are available in the literature to create robots. In the Programmable Matter context, we try to design robots that can scale down to small size, using low power for processors and actuators. In this paper, we are interested in solutions that ensure that a motion of a module allow to reach a cell of the lattice.
	
	
	Crystalline Robots~\cite{rus_2001_crystalline}, developed by Rus et al. in 2001 is an interesting solution. These robots can move relatively to each other by expanding and contracting. A robot can move a neighbor by doubling its length along $\vec{x}$ and $\vec{y}$ axes. These robots are grouped in meta-modules of 4x4 units placed in a 2D square grid. Robot to robot attachment is made by a mechanical system called "lock and ley" located on the square connected faces.
	
	In~\cite{suh_2002_telecubes} Suh et al propose the Telecube, a cubic robot able to move in a cubic lattice. Similarly to previous work, Telecube can shrink using internal motors to move a neighbor. Telecube are grouped in meta-modules made of $2 \times 2 \times 2$ units. The six arms are terminated by sensors to detect neighbors and electro-permanent magnets connect the arm of the neighboring module.
	
	The \textit{Catom} model presented in~\cite{piranda_2018_designing} is a robot that can move in a \textsc{FCC} lattice in rolling on the border of its neighbors. It uses electrostatic actuators, both for latching on planar connectors and rolling around cylindrical parts separating connectors.

	Table~\ref{tab:comparison} shows a comparison of these robots and the \Datom model.
	\begin{table}[ht]
		\centering
		\caption{Comparison}
		\label{tab:comparison}
		\begin{tabularx}{0.48\textwidth}{>{\setlength\hsize{0.8\hsize}}X>{\setlength\hsize{0.6\hsize}}X>{\setlength\hsize{1.7\hsize}}X>{\setlength\hsize{1.3\hsize}}X>{\setlength\hsize{1.0\hsize}}X>{\setlength\hsize{0.6\hsize}}X} 
			\hline
			Robot & Lattice  & Strong attachment & MetaModule granularity & Tunnelling & Motion \\ \hline
			Cristalline & Square & Yes Mecanical & 4x4 & Yes & slide  \\ 
			Telecube & Cubic & Yes Magnetic  & 2x2x2  & Yes & slide \\ 
			Catom & \textsc{FCC} & No electrostatic & 1 & No & roll \\ 
			Datom & \textsc{FCC} & Yes & 1 & No & turn \\ 
			\hline 
		\end{tabularx}
	\end{table}
	
	\begin{figure}
		\center
		\includegraphics[width=0.48\textwidth]{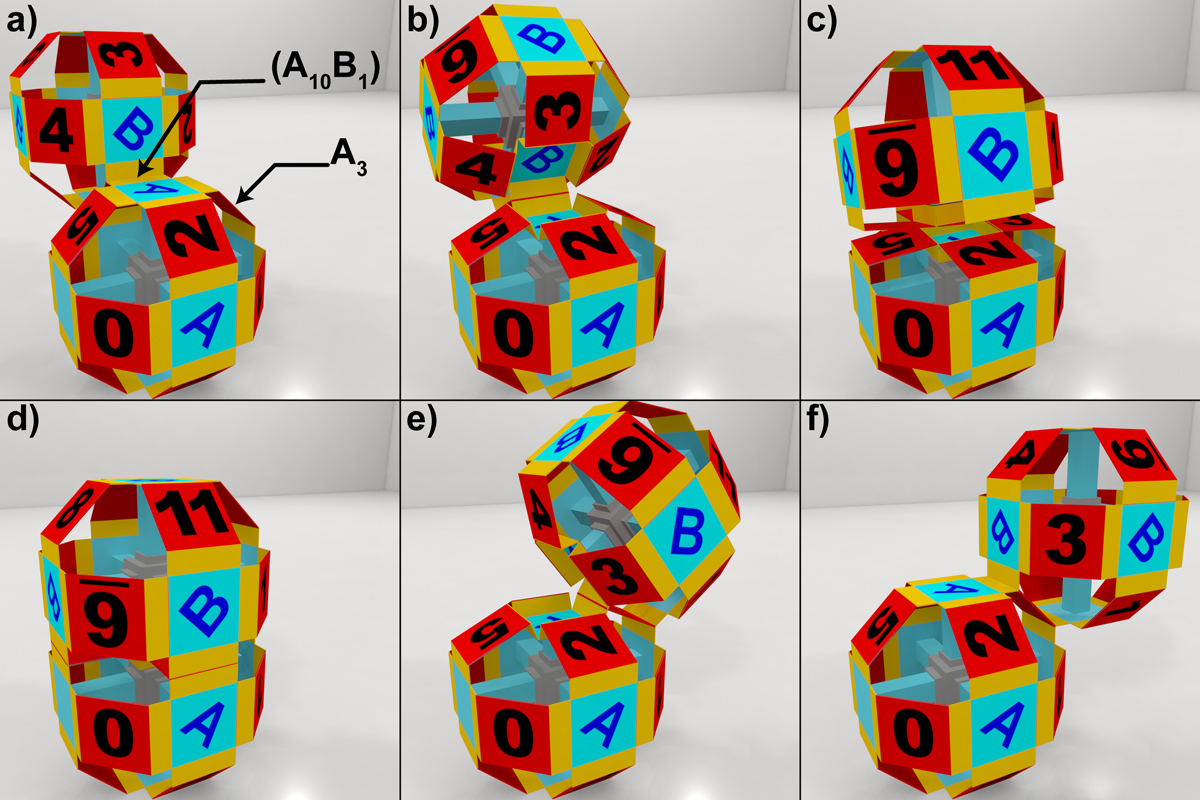}
		\caption{6 steps of the motion of module B around the fixed module A.}
		\label{fig:motion}%
	\end{figure}
	
	\section{The \Datom model}
	
	\subsection{Theoretical geometry of the deformable module}
	The shape of the module is deduced from the shape of the \textit{catom} proposed in~\cite{piranda_2018_designing}. From this initial geometry, we retain the position of the 12 square connectors, centered at $P_i$. This positions are imposed by the placement of modules in the \textsc{FCC} lattice.
	
	\begin{equation}
	\begin{array}{l|l|l}
	P_0(r,0,0) & P_2(\frac{r}{2},\frac{r}{2},\frac{r}{\sqrt{2}}) & P_8(-\frac{r}{2},-\frac{r}{2},-\frac{r}{\sqrt{2}}) \\[1.5mm]
	P_1(0,r,0) & P_3(-\frac{r}{2},\frac{r}{2},\frac{r}{\sqrt{2}}) & P_9(\frac{r}{2},-\frac{r}{2},-\frac{r}{\sqrt{2}}) \\[1.5mm]
	P_6(-r,0,0) & P_4(-\frac{r}{2},-\frac{r}{2},\frac{r}{\sqrt{2}}) & P_{10}(\frac{r}{2},\frac{r}{2},-\frac{r}{\sqrt{2}}) \\[1.5mm]
	P_7(0,-r,0) & P_5(\frac{r}{2},-\frac{r}{2},\frac{r}{\sqrt{2}}) & P_{11}(-\frac{r}{2},\frac{r}{2},-\frac{r}{\sqrt{2}}) \\[1.5mm]
	\end{array}
	\end{equation}
	
	The size of the \Datom is given by the distance between its two opposite connectors, this diameter is equal to $2 \times r$ (where $r$ is the radius). 
	
	Electrostatic actuators produce latching forces that are proportional to the surface of the actuator. Then, maximizing the size $c$ of the square connector increases connection strength.
	We search the maximum size of connector ($c$) that allows to connect simultaneously two connectors in the deformed shape. 
	The goal is to align connector for each neighboring module to connect these connectors at the same time. 
	If we consider the plane of four coplanar connectors (see Figure~\ref{fig:motion}d for example), we can see that the maximum width of connector is the 'diagonal' length $\ell$ of the module divided by $3$. 
	
	Considering the point of view presented in Figure~\ref{fig:shape}, we can express $c=\frac{\ell}{3}$ where $\ell=\sqrt{2}(r+\frac{c}{2})$. We obtain:
	
	\begin{equation}
	c = \dfrac{2 \times r}{3\sqrt{2}-1} \approx 0.61678 \times r
	\end{equation} 
	
	In Figure~\ref{fig:shape}, connectors of length $c$ are drawn in red and the piston actuator of length $c$ is drawn in blue. Mechanical links (drawn in green) of length $e$ are placed between piston and connectors. 
	
	Figure~\ref{fig:shape}.a shows 2 connectors $C_0$ and $C_1$ viewed from the top.
	In order to align them, we propose to turn them around the $\overrightarrow{z}$ axis at points $P_0$ and $P_1$ with an angle of $+45^\circ$ for $C_0$ and $-45^\circ$ for $C_1$ as shown in Figure~\ref{fig:shape}.b.
	
	\begin{figure}
		\center
		\includegraphics[width=0.48\textwidth]{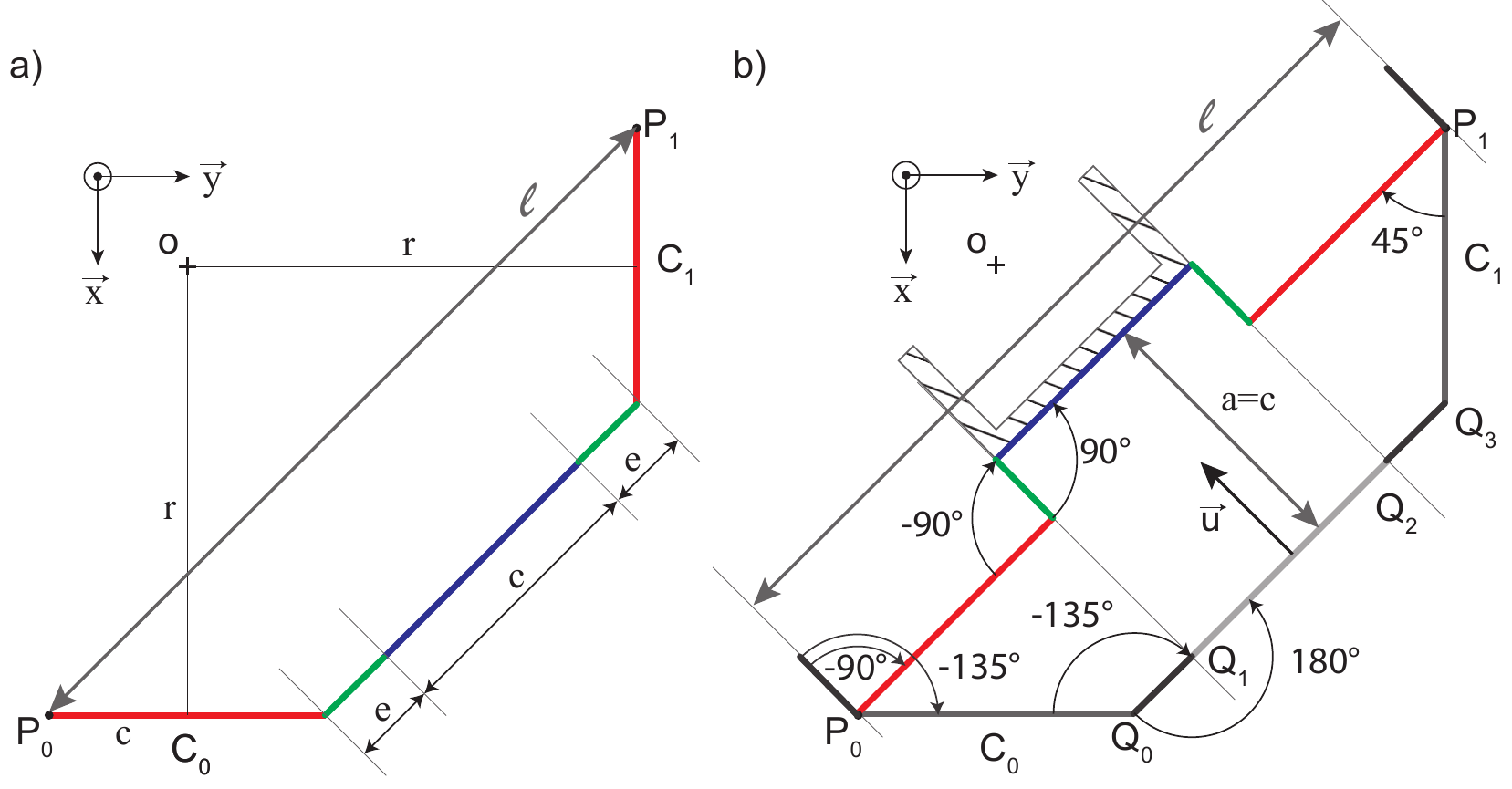}
		\caption{Size and position of each component of the robot to allow deformation. a) The rest position of the blue piston places red connector in the border of the \textsc{FCC} cells. b) Compressed position of the piston aligns connectors using green links to allow motion.}
		\label{fig:shape}%
	\end{figure}
	
	Considering Figure~\ref{fig:shape}.a, we can write a relation between $c$, $r$ and $e$ parameters:
	\begin{equation}
	r=\dfrac{c}{2}+\left(\dfrac{c}{2}+e\right)\sqrt{2}
	\end{equation}  
	
	That allows to deduce $e$ depending of the radius $r$:
	\begin{equation}
	e=r\left( \dfrac{2-\sqrt{2}}{3\sqrt{2}-1} \right) \approx 0.18065 \times r
	\end{equation}
	
	\subsection{Deformation}
	
	Considering Figure~\ref{fig:shape}.a, we can now calculate the amplitude $a$ of the piston translation to go from the rest position to the deformed one.
	\begin{equation}
	a=\frac{\sqrt{2}}{2} c +e = \frac{\sqrt{2}}{2}c + \left( \dfrac{2-\sqrt{2}}{3\sqrt{2}-1} \times \dfrac{3\sqrt{2}-1}{2}c \right) = c
	\end{equation}
	
	We obtain that the amplitude of motion of the piston is equal to the size of a connector. And it is interesting to remark that we can place a $c$ large cube in the centre of the module.

	The deformation to compress one side of the module is obtained by translating the corresponding piston along its $\overrightarrow{u}$ axis. 
	It implies that the angle of joint between links and connectors ($Q_0$) goes from $-135^\circ$ to $-90^\circ$ and angle of joint between links and piston ($Q_1$) goes from $180^\circ$ to $90^\circ$. Finally the angle of joint between fixed links and connectors ($P_0$) goes from $-135^\circ$ to $-90^\circ$ as shown in Figure~\ref{fig:shape}.b.
	
	During this deformation, only one of the 6 pistons must move in order to use the other elements as fixed supports at $P_0$ and $P_1$ points.
	
	\subsection{Creating thick elements}
	
	The theoretical shape of the \Datom is not usable as is. To create a real functional module, we must consider that connectors have a not null thickness. 
	
	Let $t$ be the thickness of the several mobiles parts of the module (connectors, link and piston). In order to place the \Datom in the \textsc{FCC} lattice, the important point is to keep the distance between two opposite connectors equal to $2 \times r$. We then define $r'=r-\frac{t}{2}$ as the corrected radius taking into account connector thickness.
	
	Using this corrected radius, we can express $c'$ and $e'$:
	\begin{align}
	c' = \frac{2 \times r'}{3\sqrt{2}-1} \\
	e' = \frac{2-\sqrt{2}}{3\sqrt{2}-1}r'
	\end{align}
	
	The construction of the link part implies that the thickness $t$ must be less than $e'$ (See Figure~\ref{fig:shape_thick}).
	We obtain that $t$ must be less than $0.19859 r$.
	
	
	The central part (we call it the "core") of the \Datom is a cube of $c-t$ edge size. 
	
	\begin{figure}
		\center
		\includegraphics[width=0.48\textwidth]{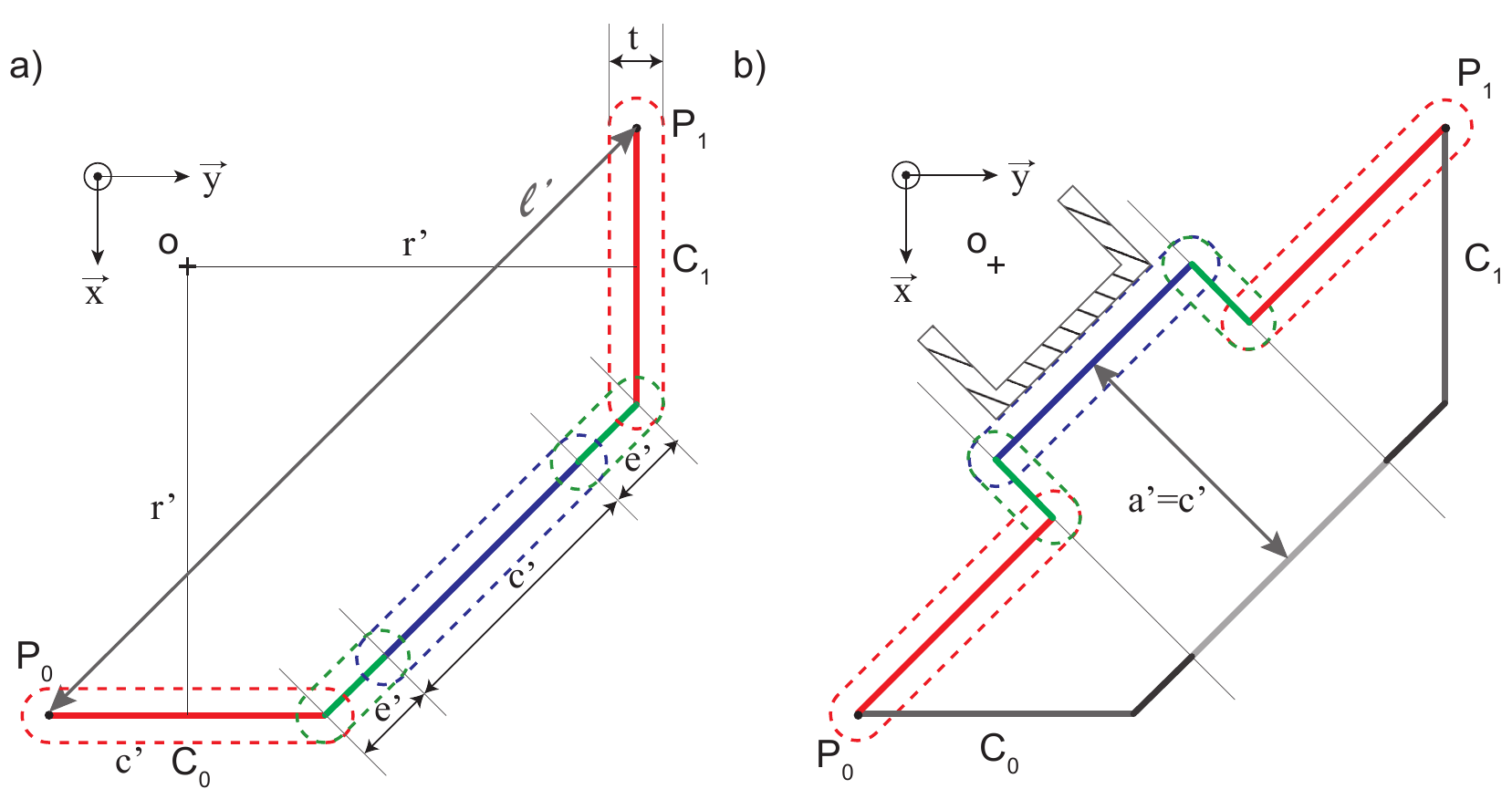}
		\caption{Size and position of components taking into account of the thickness. a) The rest position. b) Compressed position of the piston.}
		\label{fig:shape_thick}%
	\end{figure}
	
	We use rotation limits of each joint (between connector and link and between link and piston) to shape blocking plots. This blocking plots help for the stability of the whole system. 
	For example, Figure~\ref{fig:jointLinkConnector} shows blocking plots for the joint between the connector and the link parts.
	
	\begin{figure}
		\center
		\includegraphics[width=0.48\textwidth]{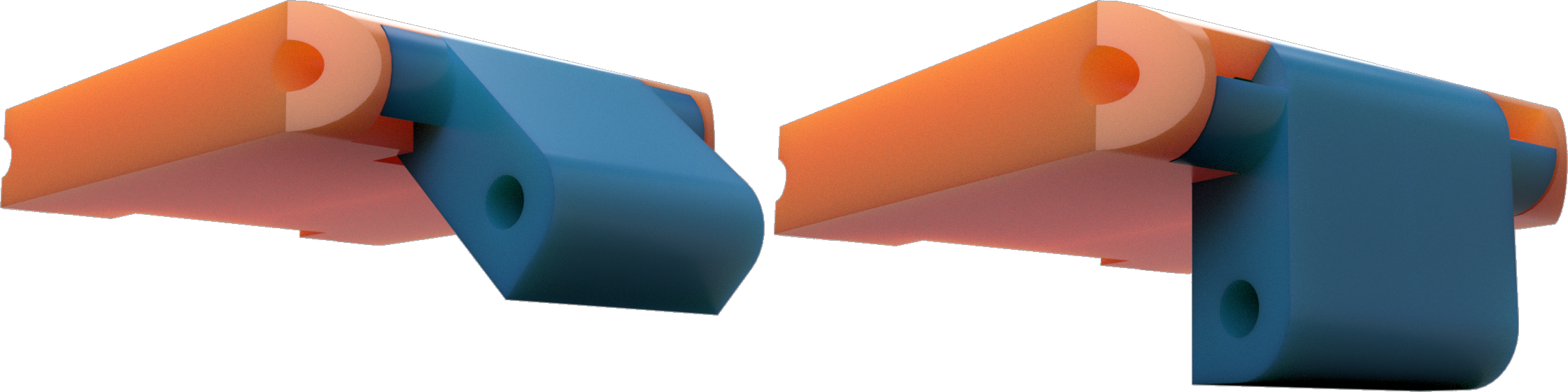}
		\caption{Angular blocking plots for joint between the connector and the link parts.}
		\label{fig:jointLinkConnector}%
	\end{figure}
	
	\subsection{Actuators}
	\subsubsection{Latching actuators}
	There are many ways to design a latching actuator and these designs use, principally, three possibilities: mechanical, electromagnetic or electrostatic. Mechanical actuators does not require power for maintaining the two modules together but they are difficult to miniaturize and slow for moving. Electromagnetic actuators require power for latching which causes heating and loss of strength. Finally, electrostatic actuators appears to be a good solution as the strength is sufficient for latching and they does not need power when latching.
	As we want to scale down our \Datom to mm-scale, the best option appears to be electrostatic actuators. We can take a design done for the cylindrical catoms~\cite{karagozler_2009_stress}.
	
	\subsubsection{Deformation actuators}
	
	\begin{figure}[b]
		\center
		\includegraphics[width=0.30\textwidth]{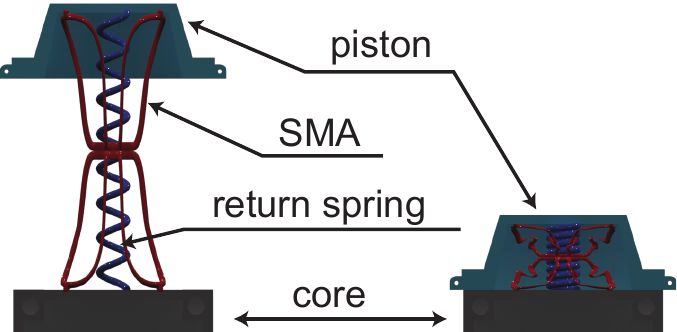}
		\caption{System of two Shape Memory Alloy (SMA) springs to actuate the piston.}
		\label{fig:spring}%
	\end{figure}
	
	
	In order to make the actuation of the deformation of a \Datom, we envisage two different technical solutions that must be evaluated later.
	The first one consists in placing a Shape Memory Alloy (SMA) between the piston and the core. This object is able to change his shape if warmed, it must be made in order to be long in rest mode and short in deformed state.
	This system must be coupled with a return spring that will restore the SMA in its initial shape (as shown in Figure~\ref{fig:spring}).
	
	
	The second solution consists in placing actuators in the intersection of the links and the pistons. These actuators must be able to change the angle between a link and a piston from $-90^\circ$ to $-135^\circ$. Electro-ribbon actuators presented by Taghavi et al.~\cite{taghavi_2018_electro-ribbon} could be adapted to create such muscles that make the deformation of the \Datom possible. In this case, the centered core is no more necessary but the synchronization of 4 actuators per piston would be complex.
	
	\section{Motion capabilities in an ensemble}
	\label{sec:six}
	
	
	
	We now consider a configuration of several \Datoms placed in a \textsc{FCC} lattice.
	To simplify, we can consider planes covered by a square lattice along $\overrightarrow{x}$ and $\overrightarrow{y}$ axes, which are interleaved with other planes along $\overrightarrow{z}$ axis.
	
	
	Motion rules proposed by Piranda et al. in \cite{piranda_2016_distributed} define a list of motions that are available for a considered module and taking into account several constraints in the neighboring cells of the lattice. We will define here which conditions in terms of presence and state of modules in neighboring cells are necessary for each available motion.
	
	Let study the possible motions of a module $B$ using a module $A$ as a pivot (to simplify notations, we use the same letter to name a free cell of the lattice and the module placed in the cell if it exists).

	\begin{myDefinition}
		A motion rule is a list of tuples $(P,S)$ where $P$ is a position in the grid relative to the pivot $A$ and $S$ is a status of the cell placed at position $P$. Status $S$ can have one of the following values, or a combination of $\emptyset$ and one of the values:
		\begin{itemize}
			\item $\emptyset$, if the cell must be empty (no module at this position), 
			\item a module name, if the cell must be filled, 
			\item $def(\overrightarrow{X})$, if the cell must be filled by a deformed module, the deforming piston being oriented in the direction $\overrightarrow{X}$,
			\item $def(\overrightarrow{X},\overrightarrow{Y})$, if the cell must be filled by a module initially deformed along $\overrightarrow{X}$ axis and along $\overrightarrow{Y}$ axis at the end of the motion. 
		\end{itemize}
		
	\end{myDefinition}
	
	\begin{myTheorem}
		A motion rule is valid if all tuples of its list are validated by the current configuration.
		The Table~\ref{tab:rules} gives the list of tuples for each motion rules.
	\end{myTheorem}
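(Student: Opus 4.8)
The plan is to read this statement as asserting that, for each elementary motion of $B$ around the pivot $A$, the tuple list recorded in Table~\ref{tab:rules} captures exactly the conditions on the neighborhood under which the deformation sequence of Figure~\ref{fig:motion} can be executed while connectivity is preserved. I would therefore organize the argument as a geometric case analysis driven by the deformation kinematics already established, rather than as an abstract algebraic derivation. The object to justify is twofold: that satisfying every tuple is sufficient for the motion to be realizable, and that the rows of the table are the correct, tight lists of tuples.

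First I would pin down the motion primitive precisely. Using the connector coordinates $P_i$ and the result that the piston amplitude equals the connector size $c$, I would parametrize the continuous deformation of $B$ (and of every cooperating module) by the piston travel $a$, and compute the region swept by all moving parts over the whole trajectory: from the rest state of Figure~\ref{fig:motion}.a, through the half-deformed state of Figure~\ref{fig:motion}.d where the four pairs $(A_{10},B_1)$, $(A_3,B_2)$, $(A_2,B_3)$, $(A_5,B_4)$ are aligned, to the final rest state of Figure~\ref{fig:motion}.f. The corrected radius $r'$ and the thickness $t$ enter here, since the swept volume must be evaluated for the thick parts of Figure~\ref{fig:shape_thick}, not for the ideal shape.

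Next I would split validity into two independent requirements and show each is encoded by one tuple type. Collision-freeness demands that every \textsc{FCC} cell met by the swept region be empty, which is precisely the tuples with status $\emptyset$; maintained connectivity and mechanical support demand that the pivot, the destination anchor, and every module that must deform in cooperation be present and in the prescribed state, which is precisely the tuples carrying a module name or the statuses $def(\overrightarrow{X})$ and $def(\overrightarrow{X},\overrightarrow{Y})$. For each admissible destination --- in particular the four reachable positions identified at the half-deformed configuration, together with their mirrored and out-of-plane variants --- I would read off the neighboring cells the swept region interacts with, classify each as empty, filled, or deformed, and thereby emit one row of Table~\ref{tab:rules}. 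Sufficiency then follows directly: if all listed tuples hold, the parametrized path is collision-free and keeps one connector pair latched at every instant (first $(A_{10},B_1)$, then $(A_3,B_2)$ after the swap), so the deformation can be driven to completion.

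I expect the main obstacle to lie in the swept-volume computation of the second step: exhaustively verifying, for every motion in the table, that the thick-part swept region intersects precisely the enumerated cells and no others. This is a finite but delicate collision check in the \textsc{FCC} lattice, and it is exactly where the constraint $t<e'$ and the cooperative deformation of the modules along the path must be used carefully, so as to guarantee both that a cooperating module really clears the way when it deforms and that no cell left off the list is in fact grazed during the continuous motion. Establishing this tightness is what upgrades the table from a plausible list of preconditions to a correct characterization of when a motion rule is valid.
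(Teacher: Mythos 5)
The first thing to note is that the paper does not actually prove this statement: no proof environment follows it. The theorem functions as a definition --- validity of a motion rule \emph{means} that every tuple in its list holds in the current configuration --- and Table~\ref{tab:rules} itself is asserted by construction, justified only informally through the discussion of cells $C$, $D$, $E$, $F$, $H$, $J$, $K$ around Figure~\ref{fig:constraints} and the step-by-step illustrations in Figures~\ref{fig:constraints3D} and~\ref{fig:constraints3D_2}. So your decision to read the statement as a claim with real geometric content (that the listed conditions suffice for a collision-free, connectivity-preserving motion) goes beyond what the paper does, and your decomposition into collision-freeness (the $\emptyset$ tuples) versus presence and cooperation (the named and $def(\cdot)$ tuples) is a sensible way of giving it that content.

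As a proof, however, your proposal has a genuine gap --- one you name yourself but never close: the swept-volume verification is not carried out. Everything rests on showing, for each row of Table~\ref{tab:rules}, that the region swept by the thick parts of $B$, of the pivot $A$, and of each cooperating deformed module intersects exactly the enumerated cells and no others; without that finite but delicate check you can neither emit the rows of the table nor conclude sufficiency, so what you have is a plan rather than a proof. Moreover, the step ``a cooperating module deforms, hence it clears the way'' is not automatic: the paper explicitly flags the particular case where a module in $C$, $D$, $E$ or $H$ is attached only through the four connectors of its compressed piston, and such a module cannot simply deform in place --- it must move --- which your case analysis would have to treat separately. Finally, you set out to prove more than is claimed: the theorem asserts only sufficiency (``valid if all tuples of its list are validated''), so the tightness or exactness of the lists, which you treat as half of the proof obligation, is not actually required by the statement.
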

	
	Table~\ref{tab:rules} gives the list of tuples for the three possible motions of $B$ with the pivot $A$ and a piston which displacement axis gives the up direction $\overrightarrow{U}$. The right direction $\overrightarrow{R}=\overrightarrow{BA} \wedge \overrightarrow{U}$ and the front direction $\overrightarrow{F}=\overrightarrow{U} \wedge \overrightarrow{R}$ are expressed relatively to the positions of $A$ and $B$. Every motion rule is defined relatively to the pivot $A$ placed at the origin of the system and $B$ at the top rear position of $A$, then two contextual following rules $\{ ( (0,0,0),A ) , (\overrightarrow{U}-\overrightarrow{F},B) \}$ can be added.
	
	\begin{myTheorem}
		\label{theo:bidirectional}
		Each displacement is bidirectional, if motion rules are valid to go from a cell $X$ to a cell $Y$, it exists a valid motion rule to go from $Y$ to $X$.
	\end{myTheorem}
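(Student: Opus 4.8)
The plan is to prove bidirectionality by exploiting a symmetry argument rather than checking each motion rule individually. First I would observe that the deformation process described in the excerpt (and illustrated in Figure~\ref{fig:motion}) is geometrically reversible: the motion from configuration (a) to configuration (f) passes through the symmetric intermediate state (d) in which four connector pairs are aligned but only one is latched. The key structural fact is that at this midpoint the forward motion and its reverse share the same deformed intermediate configuration, differing only in which pair of connectors is latched before the symmetric ``mirrored'' deformation completes the step. I would make this precise by noting that the second half of any motion (from state (d) to state (f)) is, by construction, a mirror image of the first half (from state (a) to state (d)).

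The core of the argument is then to show that the list of tuples $(P,S)$ constituting a valid motion rule is invariant, as a set of geometric constraints, under the involution that swaps the start cell $X$ with the destination cell $Y$. I would proceed by setting up the coordinate frame attached to the pivot $A$, with the triple $(\overrightarrow{U},\overrightarrow{R},\overrightarrow{F})$ as defined just before the statement, where $\overrightarrow{R}=\overrightarrow{BA}\wedge\overrightarrow{U}$ and $\overrightarrow{F}=\overrightarrow{U}\wedge\overrightarrow{R}$. The reverse motion uses the same pivot $A$ but now starts from cell $Y$; I would exhibit the rigid transformation (a reflection composed with the relabelling $B\leftrightarrow$ its image) that carries the forward frame to the reverse frame, and verify that each status value $\emptyset$, a module name, $def(\overrightarrow{X})$, and $def(\overrightarrow{X},\overrightarrow{Y})$ is mapped to an admissible status under this transformation. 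In particular the two-axis deformation status $def(\overrightarrow{X},\overrightarrow{Y})$ maps to $def(\overrightarrow{Y},\overrightarrow{X})$, which exactly encodes the time-reversal of the piston's motion.

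The main obstacle, and the step I would budget the most care for, is the handling of the two \emph{contextual} tuples $\{((0,0,0),A),(\overrightarrow{U}-\overrightarrow{F},B)\}$ together with the $def$ statuses of the cooperating modules along the path. Reversibility of an empty cell or a plain filled cell is immediate, but a deformed cell must return to rest in the reverse motion, so I must check that the cell carrying status $def(\overrightarrow{X})$ in the forward rule is precisely the cell that the reverse rule requires to deform along the reversed axis, and that no cooperating module is left in an inconsistent deformed state. Concretely, I would argue that since every piston displacement has amplitude exactly $c$ and returns the module to its original shape at the end of the step, the set of cells that must cooperate is the same for $X\to Y$ and $Y\to X$; only the orientation labels flip. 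Once the status map is shown to send validated tuples to validated tuples, validity of the reverse rule follows directly from validity of the forward rule by the criterion of the preceding theorem, completing the proof.
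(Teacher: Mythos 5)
Your overall strategy---reducing bidirectionality to a symmetry of the constraint lists under the frame change induced by swapping start and goal cells, while keeping the same pivot $A$---is the same idea the paper's proof rests on, and several of your details are exactly right: the pivot is reused, the contextual tuples $\{((0,0,0),A),(\overrightarrow{U}-\overrightarrow{F},B)\}$ swap correctly, and the double-deformation status $def(\overrightarrow{X},\overrightarrow{Y})$ must indeed be sent to $def(\overrightarrow{Y},\overrightarrow{X})$ under time reversal. (Your opening paragraph about the physical reversibility of the deformation in Figure~\ref{fig:motion} is intuition only; the theorem is a statement about the lattice constraint lists of Table~\ref{tab:rules}, so nothing in the proof should depend on it.)

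However, your central claim is false as stated: the tuple list of a motion rule is \emph{not} invariant under the involution that exchanges $X$ and $Y$. Only the \textit{Go ahead} rule maps onto itself; there the reversal frame change is $\overrightarrow{U}'=\overrightarrow{U}$, $\overrightarrow{R}'=-\overrightarrow{R}$, $\overrightarrow{F}'=-\overrightarrow{F}$, which merely permutes the \textit{Go ahead} tuples ($C\leftrightarrow D$, $E\leftrightarrow H$, $F\leftrightarrow J$). For a turn, the reversal frame change is $\overrightarrow{U}'=\overrightarrow{U}$, $\overrightarrow{R}'=-\overrightarrow{F}$, $\overrightarrow{F}'=\overrightarrow{R}$ (a rotation about $\overrightarrow{U}$, not a reflection), and under it the \textit{Turn left} list is carried onto the \textit{Turn right} list, not onto itself: for instance the \textit{Turn left} tuple $(2\overrightarrow{U}+\overrightarrow{R}-\overrightarrow{F},\ \emptyset \lor def(-\overrightarrow{R}))$ for cell $E$ becomes, in the reverse frame, the \textit{Turn right} tuple for cell $H$. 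So if you insist on invariance of a single rule's list, your verification step fails for two of the three rules. The theorem only asserts existence of \emph{some} valid reverse rule, and the missing idea---which is precisely the content of the paper's short proof---is the pairing of rules under reversal: \textit{Go ahead} is self-paired because its constraints are symmetric with respect to $\overrightarrow{U}$, whereas \textit{Turn left} and \textit{Turn right} are mirror images of each other, so a valid \textit{Turn left} from $X$ to $Y$ yields a valid \textit{Turn right} from $Y$ to $X$, and reciprocally. Once you add this case split and name the correct reverse rule in each case, your frame-change-plus-status-map argument goes through essentially unchanged.
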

	
	\begin{proof}
		If it exists a valid \textit{"Go ahead"} motion rule to go from $X$ to $Y$, as the motion constraints are symmetrical relatively to the up direction $\overrightarrow{U}$ of pivot $A$, the \textit{"Go ahead"} motion rule will be valid for a motion from $Y$ to $X$, using the same pivot $A$.
		
		\textit{"Turn left"} and \textit{"Turn left"} motion rules are symmetrical relatively to up direction $\overrightarrow{U}$ of pivot $A$. If it exists a valid \textit{"Turn left"} motion rule to go from $X$ to $Y$, it exists a valid \textit{"Turn right"} motion rule to go from $Y$ to $X$, and reciprocally.
	\end{proof}
	
	
	\begin{figure}
		\center
		\includegraphics[width=0.48\textwidth]{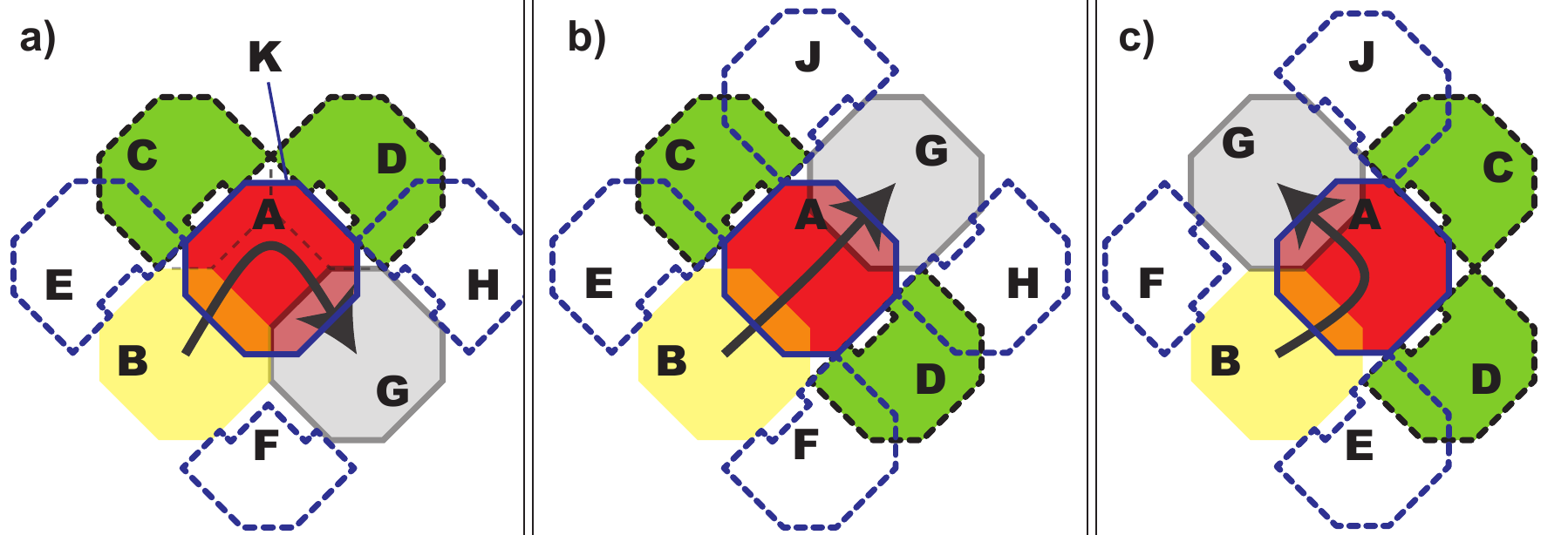}
		\caption{The three possible motions of a module B linked to a pivot A. For each motion we show the cells used by "motion rules" in the neighborhood of $A$.}
		\label{fig:constraints}%
	\end{figure}
	
	\begin{table}[ht]
		\centering
		\caption{Motion rules}
		\label{tab:rules}
		\begin{tabularx}{0.48\textwidth}{>{\setlength\hsize{0.6\hsize}}X>{\setlength\hsize{1.9\hsize}}X>{\setlength\hsize{0.5\hsize}}X} 
			\hline
			Rule & Tuples & Cell\\ \hline
			\textit{Turn left} & $\{ (\overrightarrow{U}-\overrightarrow{R}, \emptyset)$ & Goal \\
			& $(\overrightarrow{U}+\overrightarrow{F}, \emptyset \lor def(-\overrightarrow{F})),$ & $C$\\
			& $(\overrightarrow{U}+\overrightarrow{R}, \emptyset \lor def(-\overrightarrow{R})),$ & $D$\\
			& $(2\overrightarrow{U}+\overrightarrow{R}-\overrightarrow{F}, \emptyset \lor def(-\overrightarrow{R})),$ & $E$\\ 
			& $(2\overrightarrow{U}-\overrightarrow{R}-\overrightarrow{F}, \emptyset \lor def(\overrightarrow{R},\overrightarrow{F})),$ & $F$\\ 
			& $(2\overrightarrow{U}-\overrightarrow{R}+\overrightarrow{F}, \emptyset \lor def(-\overrightarrow{F})),$ & $J$\\ 
			& $(2\overrightarrow{U}, \emptyset)\}$ & $K$\\ 
			\hline
			\textit{Turn right} & $\{ (\overrightarrow{U}+\overrightarrow{R}, \emptyset) $ & Goal \\
			& $(\overrightarrow{U}-\overrightarrow{R}, \emptyset \lor def(\overrightarrow{R})),$ & $C$\\
			& $(\overrightarrow{U}+\overrightarrow{F}, \emptyset \lor def(-\overrightarrow{F})),$ & $D$\\
			& $(2\overrightarrow{U}-\overrightarrow{R}-\overrightarrow{F}, \emptyset \lor def(\overrightarrow{R})),$ & $E$\\ 
			& $(2\overrightarrow{U}+\overrightarrow{R}-\overrightarrow{F}, \emptyset \lor def(-\overrightarrow{R},\overrightarrow{F})),$ & $F$\\ 
			& $(2\overrightarrow{U}+\overrightarrow{R}+\overrightarrow{F}, \emptyset \lor def(-\overrightarrow{F})),$ & $H$\\ 
			& $(2\overrightarrow{U}, \emptyset)\}$ & $K$\\ 
			\hline
			\textit{Go ahead} & $\{ (\overrightarrow{U}+\overrightarrow{F}), \emptyset $ & Goal \\
			& $(\overrightarrow{U}-\overrightarrow{R}, \emptyset \lor def(\overrightarrow{R})),$ & $C$\\
			& $(\overrightarrow{U}+\overrightarrow{R}, \emptyset \lor def(-\overrightarrow{R})),$ & $D$\\
			& $(2\overrightarrow{U}-\overrightarrow{R}-\overrightarrow{F}, \emptyset \lor def(\overrightarrow{R})),$ & $E$\\ 
			& $(2\overrightarrow{U}+\overrightarrow{R}-\overrightarrow{F}, \emptyset \lor def(-\overrightarrow{R})),$ & $F$\\ 
			& $(2\overrightarrow{U}+\overrightarrow{R}+\overrightarrow{F}, \emptyset \lor def(-\overrightarrow{R})),$ & $H$\\ 
			& $(2\overrightarrow{U}-\overrightarrow{R}+\overrightarrow{F}, \emptyset \lor def(\overrightarrow{R})),$ & $J$\\ 
			& $(2\overrightarrow{U}, \emptyset)\}$ & $K$\\ 
			\hline 
		\end{tabularx}
	\end{table}

	Figure~\ref{fig:constraints} shows an initial configuration before a motion with every cells used by at least one motion rule tuple. 
	First, we consider the plane composed of $C$ and $D$ in green, $B$ in yellow, the moving module and the goal cell $G$ in grey. 
	The pivot $A$ (drawn in red) is placed in the underneath plane. We must take into account some cells placed on the top plane ($E$, $F$, $H$ and $J$). 
	Cells with a large continuous border must be free of any module, cells with dotted border may contain a module and cells without a border must contain a module. At the top plane, the cell $K$ placed over $A$ must be free, while cells $E$, $F$, $H$ and $J$ may contain a deformed module to free the path for $B$. 
	
	The three available motions are presented separately: Figure~\ref{fig:constraints}.a) \textit{"Turn right"}, b) \textit{"Go ahead"} and c) \textit{"Turn left"}. 
	In the case of \textit{"Turn left"} and \textit{"Turn right"} motions, if the $F$ cell is filled, the \Datom must be deformed two times during the motion. The first deformation allows $B$ to go on the top of $A$, then the deformation changes to allow $B$ to reach its final position.
	
	For example, in the first case (Figure~\ref{fig:constraints}.a), before moving module $B$ to the $G$ cell using $A$ as a pivot, we must verify that the cell $K$ on the plane on top of  $A$ is empty and then ask modules eventually placed at the $C$, $D$, $E$, $F$ and $H$ cells to deform themselves in order to free the path. 
	
	Figure~\ref{fig:constraints3D} shows the steps of the \textit{"Turn right"} motion of the module $B$:
	\begin{enumerate}[label=\alph*)]
		\item Initial configuration, \Datom $B$ plans to turn to right, it sends messages to ask $C$, $D$, $E$ and $H$ to free the path by deforming themselves.
		\item \Datoms $C$, $D$, $E$, $F$ and $H$ are deformed, $B$ can start the motion.
		\item $B$ is actuating synchronously with the pivot $A$ to create the motion.
		\item $A$ and $B$ are in the middle of the motion, they change the connectors attachment. If there is a \Datom in cell $F$, it changes its deformation to allow the final motion of $B$.
		\item $B$ reaches its final position, and asks $C$, $D$, $E$, $F$ and $H$ to release their deformation.
		\item Final configuration.
	\end{enumerate}
	
	
	\begin{figure}
		\center
		\includegraphics[width=0.48\textwidth]{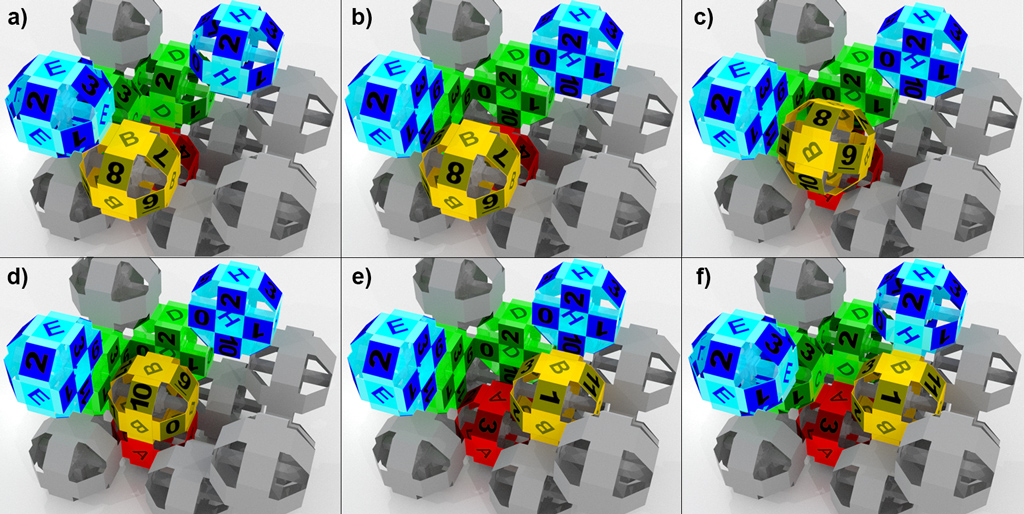}
		\caption{Some steps of displacements of a module in a constrained configuration. a) Initial configuration, b) After deformation of blocking modules. c-e) Motion steps. f) Final configuration. }
		\label{fig:constraints3D}%
	\end{figure}
	
	Figure~\ref{fig:constraints3D_2} proposes the same configuration with a new module, $F$, which must be deformed twice during the motion of $B$. In this case, Step d) is subdivided into 3 sub-steps: when $B$ reaches the position on the top of $A$, the \Datom $F$ releases its first piston and then compresses the second one, allowing $B$ to finish its motion.
	
	\begin{figure}
		\center
		\includegraphics[width=0.48\textwidth]{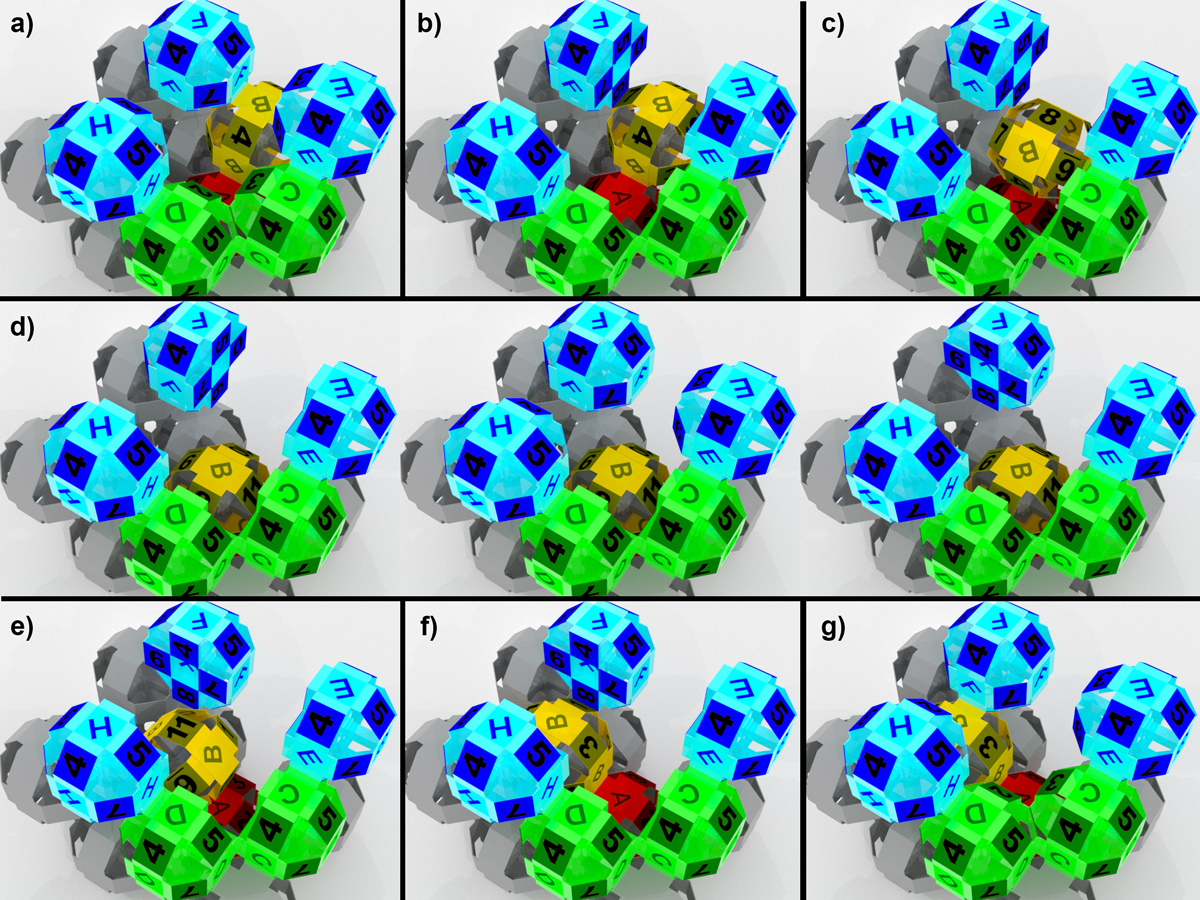}
		\caption{Some steps of displacements of a module in a constrained configuration including double deformation of module $F$.}
		\label{fig:constraints3D_2}%
	\end{figure}
	
	A particular case must be considered when $C$, $D$, $E$ or $H$ modules are only attached by one of the 4 connectors linked to the compressed piston. In this case, they must move to make the motion of $B$ possible.

	\section{Simulation}
	
	Simulations have been executed in VisibleSim~\cite{piranda_2016_visiblesim}, a modular robot simulator. 
	The goal of these experiments it to show that a \Datom can reach every free positions at the surface of a configuration only applying several unitary motions. 
	
	\subsection{Algorithms}
	We implement a first algorithm that places a \Datom at the goal cell $G$, and calculate every valid motions from this point, the reached positions are memorized in every neighbor modules. According to Theorem~\ref{theo:bidirectional} about the bidirectionality, it exists, therefore, a sequence of motions to go from each of these cells to $G$.
	We give the distance $0$ to the $G$ cell, then the distance $1$ to each cell that allows to reach $G$ cell after exactly one motion, and so on. It allows to define a gradient of distances in terms of motion to go from every reachable cells to $G$.
	
	

	A second algorithm (cf. Algo~\ref{alg:motion}) has been implemented to move a module $B$ (with $ID=1$) from one cell (a free cell of the border) to the goal position. 
	The \Datom $B$ calculates the list of reachable free cells from its current position. It then selects one of the cell which the minimum distance value. 
	It sends a message to all modules that must be deformed to allow its motion and, after an acknowledgement applies the motion.
	And so on, until it reaches the goal cell which distance is 0.
	
	
	
	
	
	
	
	
	
	\begin{algorithm}[t]
		\caption{Follow gradient.\label{alg:motion}}
		\SetInd{0.5em}{0.5em}
		\SetKwProg{Fn}{Function}{:}{end}
		\SetKwProg{Hdl}{Msg Handler}{:}{end}
		\SetKwProg{EHdl}{Event Handler}{:}{end}
		\SetKwFor{ForEach}{forEach}{do}{end}
		\SetKwRepeat{Do}{do}{while}
		\SetAlCapSkip{\MyAlgoCapSpace}
		\IncMargin{\MyAlgoDecSpace}
		\tcp{module global variables}
		int nbWaitedAnswers=0\;
		latticePosition nextPos\;
		bool isMobile=(ID==1?)\;
		module senderMobile\;
		\BlankLine
		\Fn {followGradient()} {
			tabValidRules $\gets$ getAllValidRules(datom.position)\;
			dmin $\gets \infty$ \; 
			\ForEach{rule $\in$ tabValidRules} {
				pos $\gets$ rule.finalPosition\;
				\uIf{distance(pos)$<$dmin} {
					dmin $\gets$ distance(pos)\;
					nextPos $\gets$ pos\;
					bestRule $\gets$ rule\;
				}
			}
			nbWaitedAnswers $\gets 0$\;
			\ForEach{deform $\in$ bestRule.deformationList} {
				sendMessage(deform.module,DeformMsg,\\deform.piston)\;
				nbWaitedAnswers++\;
			}
		}

		\BlankLine
		\Hdl{AckDeform(sender)} { 
			nbWaitedAnswers--\;
			\uIf {nbWaitedAnswers $=0$} {
				createEvent(DeformationModule,nextPos)\;
			}
			
		}
		\BlankLine
		\EHdl{OnDeformationEnd()}{
			\uIf{isMobile}{
				\uIf {distance(datom.position) $\ne 0$} {
					followGradient()\;
				}
			}\Else {
				sendMessage(AckDeform,senderMobile)\;
			}
		}
		\BlankLine
		\Hdl{DeformMsg(sender,piston)} { 
			senderMobile $\gets$ sender;
			createEvent(DeformationModule,piston)\;
		}
	\end{algorithm}

	\subsection{Results}
	
	\begin{figure}
		\center
		\includegraphics[width=0.48\textwidth]{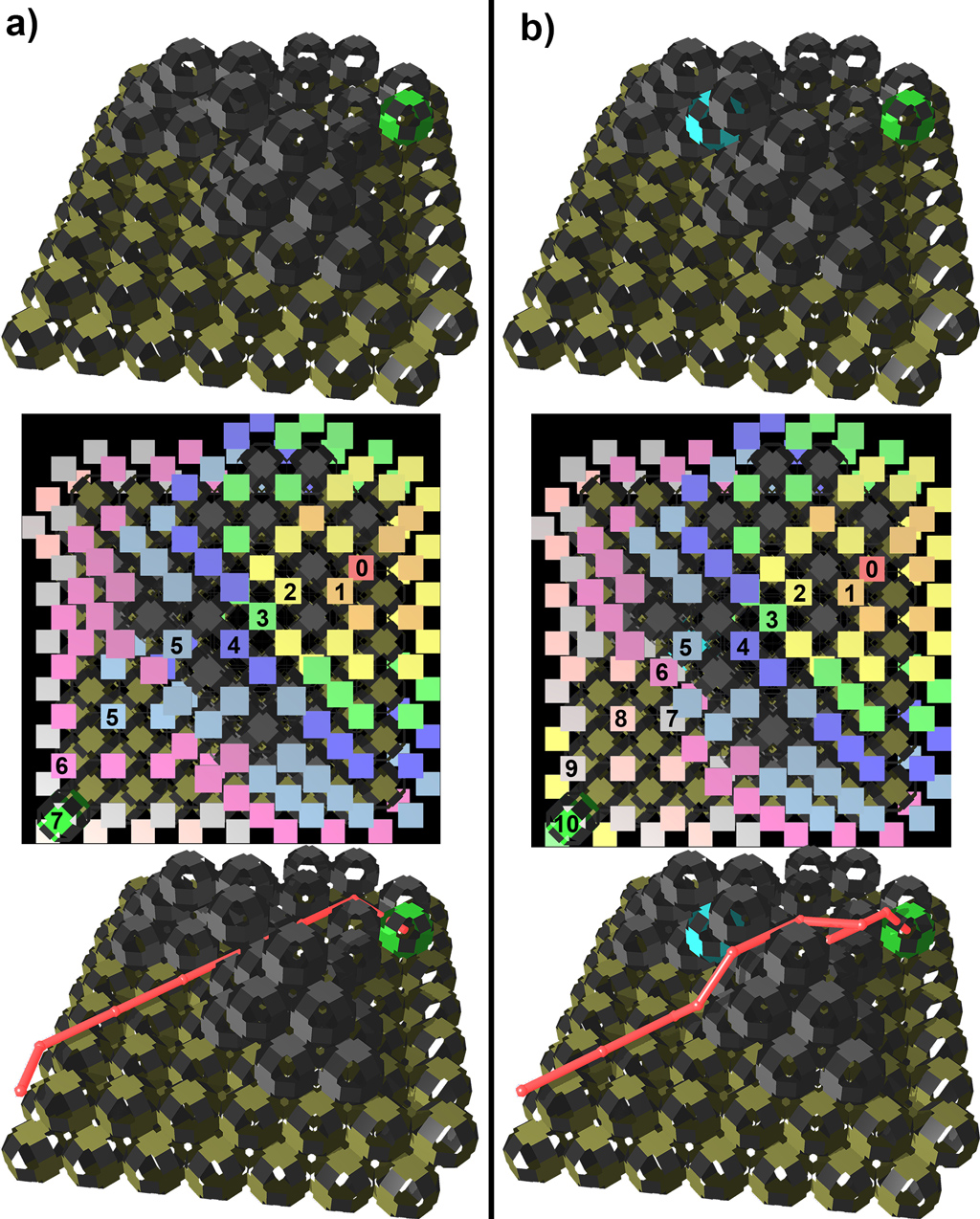}
		\caption{Simulation results of the two algorithms (gradient and motion) on two similar configurations. Distance coded by color: red:~0, orange:~1, yellow:~2, green:~3, blue:~4, cyan:~5, pink:~6, grey:~7, salmon:~8, white:~9.}
		\label{fig:gradient}%
	\end{figure}
	
	For this experiment, we construct a configuration made of 130 \Datoms. A $7 \times 7 \times 2$ box is covered by an obstacle making an arch whose hole is two \Datoms high (cf. Figure~\ref{fig:gradient}.a). And in a second time, we add a blue \Datom that reduces the size of the hole to one \Datom high only (cf. Figure~\ref{fig:gradient}.b).
	
	For these two configurations, we calculate the distance from the position $G(6,5,2)$ in the lattice (the position of the green module) to all reachable cells. The distance of these cells is represented in the second screenshot (center) where the configuration is viewed from the top. Colored square are placed at the center of the cell, the color represents the distance from the cell to the goal.
	We can observe that distances of cells at the left of the configuration are higher in the second case because the blue \Datom removes the shortcut of the arch.
	
	The third screenshot presents for the two cases the results of Algorithm~\ref{alg:motion}. The red line shows the steps of the motion of the green module from the position (0,0,2) to the goal position. In the left image, the \Datom can pass under the arch while in the second image the path goes above the obstacle.
	

	A video that shows the deformation of the datom and some results obtained on the simulator is available on \textit{YouTube}~\footnote{YouTube video:  \url{https://youtu.be/3GZsBsvMmsU}}.

	
	
	
	
	\section{Conclusion}
	This work proposes a new model of deformable robot for programmable matter called a \Datom which allows to realize safe motions in a FCC lattice.
	The size of the components and the angular limits between these pieces are precisely detailed for the realization of a real robot.
	
	We study precisely how to implement the motion of a module in an ensemble to allow a module to step by step reach every free cell at the surface of a configuration. These motions are possible if many other modules collaborate and must synchronize their own deformation, in order to free the path for another one.
	
	Future works concern mainly the realization of actuators to add muscles to this skeleton. Many potential solutions are proposed in the paper, they must be evaluated and compared, taking into account the scalability which is a crucial point in the programmable matter domain. 
	
	\section*{ACKNOWLEDGMENT}
	This work was partially supported by the ANR (ANR-16-CE33-0022-02), the French Investissements d'Avenir program, the ISITE-BFC project (ANR-15-IDEX-03), and the EIPHI Graduate School (contract ANR-17-EURE-0002).

	\bibliographystyle{plain}  
	\bibliography{datom.bib}
\end{document}